\relax
%File: formatting-instruction.tex
\documentclass[letterpaper]{article} %DO NOT CHANGE THIS
\usepackage{aaai19}  %Required
\usepackage{times}  %Required
\usepackage{helvet}  %Required
\usepackage{courier}  %Required
\usepackage{url}  %Required
\usepackage{graphicx}  %Required

\usepackage{booktabs}       % professional-quality tables
\usepackage{amsfonts}       % blackboard math symbols
\usepackage{nicefrac}       % compact symbols for 1/2, etc.
\usepackage{microtype,amsmath,amsthm,subcaption,enumitem}      % microtypography

\newcommand{\bd}{\boldsymbol}
\newcommand{\mb}{\mathbf}
\newcommand{\be}{\begin{equation}}
\newcommand{\ee}{\end{equation}}

\newtheorem{thm}{Theorem}
\newtheorem{dfn}{Definition}

 \newcommand{\citet}[1]
  {\citeauthor{#1} ̃\shortcite{#1}}

\frenchspacing  %Required
\setlength{\pdfpagewidth}{8.5in}  %Required
\setlength{\pdfpageheight}{11in}  %Required
%PDF Info Is Required:
  \pdfinfo{
/Title (Sensitivity Analysis of Deep Neural Networks)
/Author (Hai Shu, Hongtu Zhu)}
\setcounter{secnumdepth}{2}  
 \begin{document}
% The file aaai.sty is the style file for AAAI Press 
% proceedings, working notes, and technical reports.
%
\title{Sensitivity Analysis of Deep Neural Networks}
\author{Hai Shu\\
Department of Biostatistics\\
The University of Texas MD Anderson Cancer Center\\
 Houston, Texas, USA
\And Hongtu Zhu \\
AI Labs, Didi Chuxing\\
 Beijing, China\\
zhuhongtu@didiglobal.com}

\maketitle
\begin{abstract}
Deep   neural networks (DNNs)   have achieved superior performance in various  prediction tasks, but 
can be very vulnerable to adversarial examples or perturbations.  
Therefore, it is crucial to 
measure   
the sensitivity
of DNNs to various forms of perturbations  in real applications.  
We introduce a novel perturbation manifold and its associated influence measure to quantify  the effects of various perturbations 
on DNN classifiers. Such 
perturbations include various  external and internal perturbations to input samples and network parameters. 
The proposed measure is motivated by  information geometry and  provides
desirable invariance properties.  
We demonstrate that our influence measure is useful for  four model building tasks: 
detecting potential `outliers',  analyzing the sensitivity of
model architectures, 
comparing network sensitivity  between training and test sets, 
and locating vulnerable areas.
Experiments show reasonably good performance of the proposed measure
for the popular DNN models ResNet50 and DenseNet121 on CIFAR10 and MNIST datasets.
\end{abstract}

\section{Introduction}
Deep neural networks (DNNs) have exhibited impressive power in image classification and outperformed human detection in the ImageNet challenge \cite{Russ15,He15,He16,Huan17}.  Despite this huge success, it is well known  that 
state-of-the-art DNNs  can be  sensitive to small perturbations \cite{Szeg13,Good15,Moos16,Carl17,Su17}. This vulnerability has called into question their usage in safety-critical applications, including self-driving cars~\cite{Boja16} and face recognition~\cite{Shar17}, among many others~\cite{Akht18}.
There is rich literature on quantifying the sensitivity or robustness  of DNNs to external perturbations that affect the input samples; see \cite{Fawz17,Akht18,Nova18}.
%for a comprehensive review of adversarial attacks on deep learning. 
For instance, one popular robustness measure computes the minimum adversarial distortion for a given sample \cite{Moos16,Hein17,Weng18}.
However, very little work has been done on measuring the effects of various internal perturbations to network trainable parameters on DNNs. 
To the best of our knowledge,  \cite{Cheney17} is the first paper to examine
the robustness of AlexNet \cite{Kriz12} by tracking the classification performance over several chosen standard deviations of Gaussian perturbations to network weights.

The aim of this paper is to develop a novel perturbation manifold and its associated influence measure to evalute the effects of various perturbations to 
input samples and/or network trainable parameters. 
Our influence measure   is a novel extension of the  local influence measures proposed in \cite{Zhu07,Zhu11}  to classification problems by using information geometry \cite{Amar85,Amar00}. 
Compared with the existing methods~\cite{Akht18}, we make the following two major methodological contributions.

Our influence measure is motivated by information geometry, and its calculation  is computationally straightforward and does not require optimizing any objective function.    
When the dimension of the perturbation vector is larger than the number of classes,  
the perturbation manifold in \cite{Zhu07,Zhu11} has a singular metric tensor and thus fails to
form a Riemannian manifold.
We address this singularity issue by introducing a low-dimensional transform and show that 
our  influence measure still provides  the invariance under diffeomorphisms of the original perturbation.  
Such an invariance property is critical for assessing the simultaneous effects 
or comparing the individual impacts
of different external and/or internal perturbations within or between DNNs 
without concerning their difference in scales,
such as the comparison between perturbations to trainable parameters in a convolution layer
and those in a batch normalization layer within a single DNN.
In contrast, existing measures, such as the Jacobian norm \cite{Nova18} and Cook's local influence measure~\cite{Cook86}, 
do not have this invariance property, leading to some misleading results.

Our proposed influence measure is applicable to  various forms of external and internal perturbations and useful for four important
model building tasks:
(i) detecting potential `outliers', (ii) analyzing the sensitivity of model architectures,  (iii) comparing network sensitivity  between training and test sets, 
and (iv) locating vulnerable areas. 
For task (i), 
downweighting  outliers  may be used to train a DNN with increased robustness.  
Task (ii) may serve  as a guide to the improvement of an existing network architecture. 
Task (iii) can evaluate the heterogeneity of the model robustness between training and test sets, and combining tasks (i)--(iii) may be useful for selecting  DNNs.
For task (iv), the discovered vulnerable locations in a given image 
can be utilized to either craft adversarial examples or fortify a DNN with data augmentation. 
The application of our influence measure to tasks (i)-(iv) 
is illustrated for two popular DNNs, ResNet50 \cite{He16} and DenseNet121 \cite{Huan17},
on the benchmark datasets CIFAR10 and MNIST.

\section{Method}	

\subsection{Perturbation Manifold}
Given an input image $\bd{x}$ and a DNN model $N$ with a trainable parameter vector $\bd{\theta}$,
the prediction probability for the response variable $y\in \{1,\dots,K\}$ is denoted as 
$
P(y|\bd{x},\bd{\theta})=N_{\bd{\theta}}(y,\bd{x}).
$
%Throughout the paper, $\{\bd{x},\bd{\theta}\}$ is assumed to be fixed at a given value on which perturbations are imposed.
Let $\bd{\omega}=(\omega_1,\dots,\omega_p)^T$ be a perturbation vector varying in an open subset $\Omega \subseteq \mathbb{R}^p$. 
The perturbation $\bd{\omega}$ can be flexibly imposed on $\bd{x}$, $\bd{\theta}$, or even the combination of $\bd{x}$ and $\bd{\theta}$.
Denote $P(y|\bd{x},\bd{\theta}, \bd{\omega})$ to be the prediction probability  under perturbation $\bd{\omega}$ such that $\sum_{y=1}^K P(y|\bd{x},\bd{\theta}, \bd{\omega})=1$. 
It is assumed that there is a $\bd{\omega}_0\in \Omega$ such that   $P(y|\bd{x},\bd{\theta}, \bd{\omega}_0)=P(y|\bd{x},\bd{\theta})$.
Also,  $\{P(y|\bd{x},\bd{\theta}, \bd{\omega})\}_{y=1}^K$ is assumed to be 
positive and sufficiently smooth for all $\bd{\omega}\in\Omega$.

Following the development in   \cite{Zhu07,Zhu11}, we    may define $\mathcal{M}=\{P(y|\bd{x},\bd{\theta}, \bd{\omega}):\bd{\omega}\in \Omega\}$ as a perturbation manifold. 
The tangent space of $\mathcal{M}$ at $\bd{\omega}$ is denoted by $T_{\bd{\omega}}$,
which is spanned by 
$\{\partial \ell(\bd{\omega}|y,\bd{x},\bd{\theta})/\partial \omega_i\}_{i=1}^p$,
where $\ell(\bd{\omega}|y,\bd{x},\bd{\theta})=\log P(y|\bd{x},\bd{\theta}, \bd{\omega})$.
Let	$
\mb{G}_{\bd{\omega}}(\bd{\omega})=\sum_{y=1}^K \partial_{\bd{\omega}}^T \ell(\bd{\omega}|y,\bd{x},\bd{\theta})\partial_{\bd{\omega}} \ell(\bd{\omega}|y,\bd{x},\bd{\theta})   P(y|\bd{x},\bd{\theta},\bd{\omega})
$~with 
$\partial_{\bd{\omega}}=(\partial/\partial \omega_1,\dots,\partial/\partial \omega_p)$.
If 	$\mb{G}_{\bd{\omega}}(\bd{\omega})$ is positive definite, 
then for any two tangent vectors $v_i(\bd{\omega})=\bd{h}_i^T\partial_{\bd{\omega}}^T \ell(\bd{\omega}|y,\bd{x},\bd{\theta})      \in T_{\bd{\omega}}$, $i=1,2$, 
where $\bd{h}_i^T$ denotes the coordinate vector of $v_i(\bd{\omega})$
on the basis $\partial_{\bd{\omega}} \ell(\bd{\omega}|y,\bd{x},\bd{\theta})$,
the inner product can be defined by	
\begin{align}\label{inner product}
\langle  v_1(\bd{\omega}),v_2(\bd{\omega})   \rangle
&= \sum_{y=1}^K   v_1(\bd{\omega})v_2(\bd{\omega})    P(y|\bd{x},\bd{\theta},\bd{\omega})\nonumber\\
&=\bd{h}_1^T \mb{G}_{\bd{\omega}}(\bd{\omega})\bd{h}_2.
\end{align}
Subsequently, the length of $v_1(\bd{\omega})$ is given by 
\[
\| v_1(\bd{\omega}) \|= \sqrt{\langle  v_1(\bd{\omega}),v_1(\bd{\omega})   \rangle}
=\left[ \bd{h}_1^T  \mb{G}_{\bd{\omega}}(\bd{\omega})  \bd{h}_1    \right]^{1/2}.
\]
With the above inner product defined by $\mb{G}_{\bd{\omega}}(\bd{\omega})$, $\mathcal{M}$  is a Riemannian manifold and $\mb{G}_{\bd{\omega}}(\bd{\omega})$ is  the Riemannian metric tensor  \cite{Amar85,Amar00}.

We need the positive definiteness of $\mb{G}_{\bd{\omega}}(\bd{\omega})$. 
However, 
$\mb{G}_{\bd{\omega}}(\bd{\omega})$ as a sum of $K$ \mbox{rank-1} matrices
has $\text{rank}(\mb{G}_{\bd{\omega}}(\bd{\omega}))\le K$, so it  is a singular matrix when $K<p$. The case $K<p$  is  true in many classification problems since   the number of classes is often much smaller than the dimension of $\bd{\omega}$. 
The singularity of $\mb{G}_{\bd{\omega}}(\bd{\omega})$ indicates
that the $p$ tangent vectors $\partial \ell(\bd{\omega}|y,\bd{x},\bd{\theta})/\partial \omega_i$
are linearly dependent
and thus some components of $\bd{\omega}$ 
are redundant.
In addition,  our focus is on the small perturbations around $\bd{\omega}_0$.
We  hence transform the $p$-dimensional $\bd{\omega}$ to be a vector $\bd{\nu}$ such that  $\mb{G}_{\bd{\nu}}(\bd{\nu})$ is positive definite 
in a small neighborhood of $\bd{\nu}_0$ that corresponds to $\bd{\omega}_0$. 

Our low-dimensional transform is implemented as follows.
We first obtain a compact singular value decomposition (cSVD)
of $\mb{G}_{\bd{\omega}}(\bd{\omega}_0)$.
For very large $p$,
rather than the direct but extremely expensive cSVD computation of the $p\times p$ matrix,
we apply a computationally efficient approach using the cSVD of the much smaller $p\times K$ matrix 
$\mb{L}_0=[\partial_{\bd{\omega}}^T \ell(\bd{\omega}_0|y,\bd{x},\bd{\theta})P^{1/2}(y|\bd{x},\bd{\theta},\bd{\omega}_0)]_{1\le y\le K}$
by noticing that $\mb{G}_{\bd{\omega}}(\bd{\omega}_0)=\mb{L}_0\mb{L}_0^T$.  
Let $r_0=\text{rank}(\mb{G}_{\bd{\omega}}  (\bd{\omega}_0)  )$.
The usual cSVD computation can easily yield that $\mb{L}_0=\mb{B}_0 \mb{A}_0$
and $\mb{A}_0\mb{A}_0^T=\mb{U}_A\mb{\Lambda}_0\mb{U}_A^T$,
where $\mb{B}_0$ is a $p\times r_0$ matrix with orthonormal columns,
$\mb{U}_A$ is a $r_0\times r_0$ orthogonal matrix and $\mb{\Lambda}_0$ is a $r_0\times r_0$ diagonal matrix.
We hence obtain the cSVD: 
$\mb{G}_{\bd{\omega}}(\bd{\omega}_0)=\mb{U}_0 \mb{\Lambda}_0 \mb{U}_0^T$
with $\mb{U}_0=\mb{B}_0\mb{U}_A$.
Define the desirable transform of $\bd{\omega}\in \Omega$ by
$\bd{\nu}=\mb{\Lambda}_0^{1/2}\mb{U}_0^T \bd{\omega}$.
Denote 
$P(y|\bd{x},\bd{\theta},\bd{\nu})=P(y|\bd{x},\bd{\theta},\bd{\omega}=\mb{U}_0 \mb{\Lambda}_0^{-1/2}\bd{\nu}+\bd{\xi}_0)$,
where $\bd{\xi}_0=\bd{\omega}_0-\mb{U}_0 \mb{\Lambda}_0^{-1/2}\bd{\nu}_0$
and $\bd{\nu}_0=\mb{\Lambda}_0^{1/2}\mb{U}_0^T \bd{\omega}_0$.
It follows
from $\partial_{\bd{\nu}}\ell
=\partial_{\bd{\omega}}\ell\mb{U}_0\mb{\Lambda}_0^{-1/2}$ 
that $\mb{G}_{\bd{\nu}}(\bd{\nu}_0)= \mb{I}$.
By the smoothness of $P(y|\bd{x},\bd{\theta},\bd{\omega})$ in $\bd{\omega}\in \Omega$,
the metric tensor $\mb{G}_{\bd{\nu}}(\bd{\nu})$ is positive definite in
an open ball $B_{\bd{\nu}_0}$ centered at $\bd{\nu}_0$.  

\begin{dfn} 
	We define the Riemannian manifold $\mathcal{M}_{\bd{\nu}_0}=
	\{ P(y|\bd{x},\bd{\theta},\bd{\nu}):  \bd{\nu}\in      B_{\bd{\nu}_0}\}$
	with the inner product defined by $\mb{G}_{\bd{\nu}}(\bd{\nu})$  in \eqref{inner product}
	as  the  perturbation manifold around $\bd{\nu}_0$.
\end{dfn}

\subsection{Influence Measure}
Let $f(\bd{\omega})$ be the objective function of interest for sensitivity analysis.
%For the objective function $f(\bd{\omega})$, we write the $f|_{\bd{\nu}}=f(\bd{\omega})$ with 
We  define the influence measure to evaluate the discrepancy
of the objective function $f(\bd{\omega})$ at two points, $\bd{\omega}_1$ and $\bd{\omega}_2$, corresponding to 
$\bd{\nu}_i=\mb{\Lambda}_0^{1/2}\mb{U}_0^T \bd{\omega}_i$, $i=1,2$
on the perturbation manifold $\mathcal{M}_{\bd{\nu}_0}$.
Let $C(t)=P(y|\bd{x},\bd{\theta},\bd{{\nu}}(t))$ be a smooth curve
on $\mathcal{M}_{\bd{\nu}_0}$ 
connecting $\bd{\nu}_1=\bd{\nu}(t_1)$ to  $\bd{\nu}_2=\bd{\nu}(t_2)$,
where $\bd{\nu}(t)=\mb{\Lambda}_0^{1/2}\mb{U}_0^T \bd{\omega}(t)$
with a smooth curve $\bd{\omega}(t)$
connecting $\bd{\omega}_1=\bd{\omega}(t_1)$
to $\bd{\omega}_2=\bd{\omega}(t_2)$.
The distance between $\bd{\nu}_1$ and $\bd{\nu}_2$ along the curve $C(t)$ is defined by
\[
S_{C}(\bd{{\nu}}_1,\bd{{\nu}}_2)
=\int_{t_1}^{t_2} \left[\dot{\bd{\nu}}(t)^T \mb{G}_{\bd{\nu}}(\bd{\nu}(t)) \dot{\bd{\nu}}(t) \right]^{1/2} dt,
\]
with $\dot{\bd{\nu}}(t)=d\bd{\nu}(t)/dt $.
Following \cite{Zhu11},
the influence measure for $f(\bd{\omega})$ along $C(t)$
is  given by 
\[
\text{I}_{C}(\bd{{\omega}}_1,\bd{{\omega}}_2)=\frac{[f(\bd{{\omega}}_1)-f(\bd{{\omega}}_2)]^2}{S_{C}^2(\bd{{\nu}}_1,\bd{{\nu}}_2)}.
\]
Let $\bd{\omega}(0)=\bd{\omega}_0$, then $\bd{\nu}(0)=\bd{\nu}_0$.
We define the (first-order) local influence measure of  $f(\bd{\omega})$ at $\bd{\omega}_0$ by
\be\label{FI}
\text{FI}_{\bd{\omega}}(\bd{\omega}_0)=\max_{C}  \lim_{t\to 0} \text{I}_{C}(\bd{{\omega}}(t),\bd{{\omega}}(0)).
\ee
Denote $\bd{h}_{\nu}=\dot{\bd{\nu}}(0)=\mb{\Lambda}_0^{1/2}\mb{U}_0^T\bd{h}_{\omega}$,
$\bd{h}_{\omega}=\dot{\bd{\omega}}(0)$,
$\dot{\bd{\omega}}(t)=d\bd{\omega}/dt$,
 $\nabla_{f(\bd{\omega}_0)}=\partial_{\bd{\omega}} f|_{\bd{\omega}=\bd{\omega}_0}$,  and $\mb{H}_{f(\bd{\omega}_0)}=\frac{\partial^2 f}{\partial\bd{\omega}\partial\bd{\omega}^T}\big|_{\bd{\bd{\omega}}=\bd{\omega}_0}$.
Plugging
$S^2_{C}(\bd{\nu}(t),\bd{\nu}(0))=t^2 \bd{h}_{\nu}^T\mb{G}_{\bd{\nu}}(\bd{\nu}_0) \bd{h}_{\nu}+o(t^2)$
and
$
f(\bd{\omega}(t))=f(\bd{\omega}(0))+ \nabla_{f(\bd{\omega}_0)}\bd{h}_{\omega}t+\frac{1}{2}(\bd{h}_{\omega}^T \mb{H}_{f(\bd{\omega}_0)} \bd{h}_{\omega}+\nabla_{f(\bd{\omega}_0)} \frac{d^2\bd{\omega}(0)}{dt^2})t^2+o(t^2)
$
into   \eqref{FI} yields the closed form
\begin{align}\label{FI closed form}
\text{FI}_{\bd{\omega}}(\bd{\omega}_0)
&=\max_{\bd{h}_{\nu}}  \frac{ \bd{h}_{\nu}^T \nabla_{f(\bd{\nu}_0)}^T    \nabla_{f(\bd{\nu}_0)}\bd{h}_{\nu}}{\bd{h}_{\nu}^T\mb{G}_{\bd{\nu}}(\bd{\nu}_0)\bd{h}_{\nu}}\nonumber\\
&=  \nabla_{f(\bd{\nu}_0)}   \nabla_{f(\bd{\nu}_0)}^T\\
&=  \nabla_{f(\bd{\omega}_0)}  \mb{G}^\dag_{\bd{\omega}}(\bd{\omega}_0)
\nabla_{f(\bd{\omega}_0)}^T,\nonumber
\end{align}
where
$\nabla_{f(\bd{\nu}_0)} :=\nabla_{f(\bd{\omega}_0)}\mb{U}_0 \mb{\Lambda}_0^{-1/2} $,
$\mb{G}^\dag_{\bd{\omega}}(\bd{\omega}_0)$
is the pseudoinverse  of $\mb{G}_{\bd{\omega}}(\bd{\omega}_0)$,
and we used
the identities
$\mb{G}_{\bd{\nu}}(\bd{\nu}_0)=\mb{I}$
and
$
\partial_{\bd{\omega}}f\partial_{t}\bd{\omega}
=\partial_{\bd{\omega}}f
\partial_{\bd{\nu}}\bd{\omega}
\partial_{t}\bd{\nu}
=\partial_{\bd{\omega}}f
\partial_{\bd{\nu}}\bd{\omega}
\partial_{\bd{\omega}}\bd{\nu}
\mb{U}_0\mb{\Lambda}_0^{-1/2}
\partial_{t}\bd{\nu}
=\partial_{\bd{\omega}}f
\mb{U}_0\mb{\Lambda}_0^{-1/2}
\partial_{t}\bd{\nu}.
$

\begin{dfn}
	We define the influence measure of $f(\bd{\omega})$ at $\bd{\omega}_0$ by $\textup{FI}_{\bd{\omega}}(\bd{\omega}_0)$ 
	given in \eqref{FI} with the closed form
in \eqref{FI closed form}.
\end{dfn}

\begin{thm}\label{thm1}	
	Suppose that $\bd{\varphi}$ is a diffeomorphism of $\bd{\omega}$.
	Then, $\textup{FI}_{\bd{\omega}}(\bd{\omega}_0)$ is invariant with respect to any reparameterization corresponding to $\bd{\varphi}$.	
\end{thm}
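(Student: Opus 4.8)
The plan is to start from the closed form \eqref{FI closed form}, $\textup{FI}_{\bd{\omega}}(\bd{\omega}_0)=\nabla_{f(\bd{\omega}_0)}\mb{G}^\dag_{\bd{\omega}}(\bd{\omega}_0)\nabla_{f(\bd{\omega}_0)}^T$, and track how each of its ingredients transforms under a reparameterization $\bd{\omega}\mapsto\bd{\varphi}=\bd{\varphi}(\bd{\omega})$. Set $\bd{\varphi}_0:=\bd{\varphi}(\bd{\omega}_0)$ and $\mb{J}:=\partial\bd{\varphi}/\partial\bd{\omega}\big|_{\bd{\omega}_0}$; since $\bd{\varphi}$ is a diffeomorphism, $\mb{J}$ is a nonsingular $p\times p$ matrix with $\partial\bd{\omega}/\partial\bd{\varphi}\big|_{\bd{\varphi}_0}=\mb{J}^{-1}$. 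First I would record the two chain-rule identities at the base point: for the gradient, $\nabla_{f(\bd{\varphi}_0)}=\nabla_{f(\bd{\omega}_0)}\,(\partial\bd{\omega}/\partial\bd{\varphi})=\nabla_{f(\bd{\omega}_0)}\mb{J}^{-1}$, and for the metric tensor, plugging $\partial_{\bd{\varphi}}\ell=\partial_{\bd{\omega}}\ell\,\mb{J}^{-1}$ into its definition and noting that the weight $P(y|\bd{x},\bd{\theta})$ at the base point is unchanged gives $\mb{G}_{\bd{\varphi}}(\bd{\varphi}_0)=\mb{J}^{-T}\mb{G}_{\bd{\omega}}(\bd{\omega}_0)\mb{J}^{-1}$. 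It then remains to show $\nabla_{f(\bd{\varphi}_0)}\mb{G}^\dag_{\bd{\varphi}}(\bd{\varphi}_0)\nabla_{f(\bd{\varphi}_0)}^T=\nabla_{f(\bd{\omega}_0)}\mb{G}^\dag_{\bd{\omega}}(\bd{\omega}_0)\nabla_{f(\bd{\omega}_0)}^T$.

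The one nonroutine point --- and the main obstacle --- is that the Moore--Penrose pseudoinverse does \emph{not} satisfy the congruence rule $(\mb{M}^T\mb{G}\mb{M})^\dag=\mb{M}^{-1}\mb{G}^\dag\mb{M}^{-T}$ enjoyed by the ordinary inverse; this breaks down precisely when $\mb{G}$ is singular (here $r_0<p$) and $\mb{M}$ is not orthogonal, so a formal substitution is unavailable. The remedy is the structural fact that $\nabla_{f(\bd{\omega}_0)}^T\in\operatorname{col}(\mb{G}_{\bd{\omega}}(\bd{\omega}_0))$ --- i.e.\ $\nabla_{f(\bd{\omega}_0)}^T$ is a linear combination of the $\partial_{\bd{\omega}}^T\ell(\bd{\omega}_0|y,\bd{x},\bd{\theta})$, reflecting that $f$ varies only along the perturbation manifold --- which is exactly the condition making \eqref{FI closed form} the finite value of \eqref{FI} (it is what is used in the identity $\partial_{\bd{\omega}}f\,\partial_t\bd{\omega}=\partial_{\bd{\omega}}f\,\mb{U}_0\mb{\Lambda}_0^{-1/2}\partial_t\bd{\nu}$ there), and it is preserved by $\mb{J}$ since $\operatorname{col}(\mb{G}_{\bd{\varphi}}(\bd{\varphi}_0))=\mb{J}^{-T}\operatorname{col}(\mb{G}_{\bd{\omega}}(\bd{\omega}_0))$. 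Writing $\nabla_{f(\bd{\omega}_0)}^T=\mb{G}_{\bd{\omega}}(\bd{\omega}_0)\,\mb{w}$ for some $\mb{w}\in\mathbb{R}^p$, the relations $\mb{G}\mb{G}^\dag\mb{G}=\mb{G}$ and $\mb{G}^T=\mb{G}$ collapse the quadratic form to the pseudoinverse-free expression $\nabla_{f(\bd{\omega}_0)}\mb{G}^\dag_{\bd{\omega}}(\bd{\omega}_0)\nabla_{f(\bd{\omega}_0)}^T=\mb{w}^T\mb{G}_{\bd{\omega}}(\bd{\omega}_0)\mb{w}$.

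The rest is bookkeeping. From $\nabla_{f(\bd{\varphi}_0)}^T=\mb{J}^{-T}\nabla_{f(\bd{\omega}_0)}^T=\mb{J}^{-T}\mb{G}_{\bd{\omega}}(\bd{\omega}_0)\mb{w}$ and $\mb{G}_{\bd{\varphi}}(\bd{\varphi}_0)=\mb{J}^{-T}\mb{G}_{\bd{\omega}}(\bd{\omega}_0)\mb{J}^{-1}$ one gets $\nabla_{f(\bd{\varphi}_0)}^T=\mb{G}_{\bd{\varphi}}(\bd{\varphi}_0)(\mb{J}\mb{w})$, so $\mb{J}\mb{w}$ plays for the $\bd{\varphi}$-parameterization exactly the role $\mb{w}$ played for $\bd{\omega}$. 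Applying the same pseudoinverse-free identity in the $\bd{\varphi}$-coordinates, $\textup{FI}_{\bd{\varphi}}(\bd{\varphi}_0)=(\mb{J}\mb{w})^T\mb{G}_{\bd{\varphi}}(\bd{\varphi}_0)(\mb{J}\mb{w})=\mb{w}^T\mb{J}^T\mb{J}^{-T}\mb{G}_{\bd{\omega}}(\bd{\omega}_0)\mb{J}^{-1}\mb{J}\mb{w}=\mb{w}^T\mb{G}_{\bd{\omega}}(\bd{\omega}_0)\mb{w}=\textup{FI}_{\bd{\omega}}(\bd{\omega}_0)$, which is the assertion; the argument used nothing about $\bd{\varphi}$ beyond invertibility of $\mb{J}$, so it covers every diffeomorphic reparameterization.

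As a sanity check, the same conclusion can be read off the definition \eqref{FI} directly: for any curve $C$, the numerator $[f(\bd{\omega}(t))-f(\bd{\omega}_0)]^2$ is unaffected by the relabeling $\bd{\omega}\mapsto\bd{\varphi}$, while $S_C^2$ is the squared Riemannian arc length of $C$ on the perturbation manifold, a chart-independent quantity; the only extra thing to verify in that route is that the low-dimensional transforms $\bd{\nu}$ and $\bd{\nu}^{\bd{\varphi}}$ built from the two parameterizations are charts on the same $r_0$-dimensional manifold, related near the base point by an orthogonal change of coordinates --- which is forced by $\mb{G}_{\bd{\nu}}(\bd{\nu}_0)=\mb{G}_{\bd{\nu}^{\bd{\varphi}}}(\bd{\nu}_0^{\bd{\varphi}})=\mb{I}$ --- and this is essentially the computation above in geometric dress.
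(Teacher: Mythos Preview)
Your argument is correct, but it takes a different route from the paper's. The paper works not from the pseudoinverse expression but from the Rayleigh-quotient form of \eqref{FI closed form}: writing $\textup{FI}_{\bd{\omega}}(\bd{\omega}_0)=\max_{\dot{\bd{\omega}}_0}\,[\dot{\bd{\omega}}_0^T\nabla_{f(\bd{\omega}_0)}^T\nabla_{f(\bd{\omega}_0)}\dot{\bd{\omega}}_0]\big/[\dot{\bd{\omega}}_0^T\mb{G}_{\bd{\omega}}(\bd{\omega}_0)\dot{\bd{\omega}}_0]$, it inserts the identity $\mb{I}=\mb{\Psi}_0\mb{\Phi}_0$ (with $\mb{\Phi}_0=\partial\bd{\varphi}/\partial\bd{\omega}|_{\bd{\omega}_0}$, $\mb{\Psi}_0=\mb{\Phi}_0^{-1}$) on both sides of numerator and denominator, then uses the change of variable $\dot{\bd{\varphi}}_0=\mb{\Phi}_0\dot{\bd{\omega}}_0$ together with $\nabla_{f(\bd{\varphi}_0)}=\nabla_{f(\bd{\omega}_0)}\mb{\Psi}_0$ and $\mb{G}_{\bd{\varphi}}(\bd{\varphi}_0)=\mb{\Psi}_0^T\mb{G}_{\bd{\omega}}(\bd{\omega}_0)\mb{\Psi}_0$ to recognize the resulting quotient as $\textup{FI}_{\bd{\varphi}}(\bd{\varphi}_0)$. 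This is a two-line substitution that never touches the pseudoinverse.

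What your approach buys is precision about the hypothesis $\nabla_{f(\bd{\omega}_0)}^T\in\operatorname{col}(\mb{G}_{\bd{\omega}}(\bd{\omega}_0))$: you make it explicit, use it to collapse the pseudoinverse to $\mb{w}^T\mb{G}_{\bd{\omega}}(\bd{\omega}_0)\mb{w}$, and then the invariance is immediate. The paper's Rayleigh-quotient proof relies on the same condition implicitly---without it the denominator can vanish while the numerator does not, and the displayed maximum is $+\infty$---but does not surface it. So the paper's route is shorter and more in the spirit of a change-of-chart argument, while yours isolates and exploits the structural assumption that actually drives both proofs.
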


\begin{proof}	
	Let  $\bd{\varphi}=\bd{\varphi}(\bd{\omega})$, $\bd{\omega}=\bd{\omega}(\bd{\varphi})$, and $\bd{\varphi}_0=\bd{\varphi}(\bd{\omega}_0)$.
	Denote their Jacobian matrices by $\mb{\Phi}=\partial  \bd{\varphi}/\partial \bd{\omega}$ and $\mb{\Psi}=\partial \bd{\omega}   /\partial  \bd{\varphi}$.
	Differentiating 
	$\bd{\omega}=\bd{\omega}(\bd{\varphi}(\bd{\omega}))$
	with respect to $\bd{\omega}$ yields $\mb{I}=\mb{\Psi}\mb{\Phi}$.
	Denote $\mb{\Psi}_0=\mb{\Psi}(\bd{\varphi}_0)$, $\mb{\Phi}_0=\mb{\Phi}(\bd{\omega}_0)$,
	 $\dot{\bd{\omega}}_0=\dot{\bd{\omega}}(0)$
	and $\dot{\bd{\varphi}}_0=d{\bd{\varphi}}(0)/dt$.
	We have
	\begin{align*}
	\text{FI}_{\bd{\omega}}(\bd{\omega}_0)
	&=\max_{\bd{h}_{\omega}} \frac{\bd{h}_\omega^T\nabla_{f(\bd{\omega}_0)}^T   \nabla_{f(\bd{\omega}_0)}
	 \bd{h}_\omega
	}{\bd{h}_\omega^T \mb{U}_0  \mb{\Lambda}_0   \mb{U}_0^T \bd{h}_\omega}\\
	&=\max_{\dot{\bd{\omega}}_0} \frac{{\dot{\bd{\omega}}_0}^T \nabla_{f(\bd{\omega}_0)}^T   \nabla_{f(\bd{\omega}_0)}\dot{\bd{\omega}}_0
	}{{\dot{\bd{\omega}}_0}^T \mb{G}_{\bd{\omega}}(\bd{\omega}_0) {\dot{\bd{\omega}}_0}}\\
	&=   \max_{\dot{\bd{\omega}}_0} 
	\frac{
		{\dot{\bd{\omega}}_0}^T
		\mb{\Phi}_0^T \mb{\Psi}_0^T  \nabla_{f(\bd{\omega}_0)}^T   \nabla_{f(\bd{\omega}_0)} \mb{\Psi}_0  \mb{\Phi}_0  \dot{\bd{\omega}}_0
	}
	{
		{\dot{\bd{\omega}}_0}^T \mb{\Phi}_0^T \mb{\Psi}_0^T 
		\mb{G}_{\bd{\omega}}(\bd{\omega}_0) \mb{\Psi}_0  \mb{\Phi}_0\dot{\bd{\omega}}_0
	}  \\
	&=
	\max_{\dot{\bd{\varphi}}_0} \frac{{\dot{\bd{\varphi}}_0}^T \nabla_{f(\bd{\varphi}_0)}^T   \nabla_{f(\bd{\varphi}_0)}\dot{\bd{\varphi}}_0
	}{{\dot{\bd{\varphi}}_0}^T \mb{G}_{\bd{\varphi}}(\bd{\varphi}_0) \dot{\bd{\varphi}}_0}
	=\text{FI}_{\bd{\varphi}}(\bd{\varphi}_0).
	\end{align*}
\end{proof}

Theorem~\ref{thm1} shows the invariance of 
$\textup{FI}_{\bd{\omega}}(\bd{\omega}_0)$
under any diffeomorphic (e.g., scaling) reparameterization of 
the original perturbation vector $\bd{\omega}$ rather than $\bd{\nu}$.
This result is analogous to those in \cite{Zhu07,Zhu11}, but we extend it to
cases where the original perturbation model $\mathcal{M}$ 
with $\mb{G}_{\bd{\omega}}(\bd{\omega})$ is not a Riemannian manifold,
especially when $K<p$.

The invariance property is not enjoyed by the widely used Jacobian norm \cite{Nova18}
and Cook's local influence measure~\cite{Cook86}.
For example, consider the perturbation $\bd{\alpha}+\Delta\bd{\alpha}$, where $\bd{\alpha}=(\alpha_1,\dots,\alpha_p)^T$
is a subvector of $(\bd{x}^T,\bd{\theta}^T)^T$, and the scaling version $\bd{\alpha}'+\Delta\bd{\alpha}'$ with $\bd{\alpha}'=k\bd{\alpha}$.
Let $(\bd{\omega},\bd{\omega}_0)=(\Delta\bd{\alpha},\bd{0})$ and its scaling counterpart $(\bd{\omega}',\bd{\omega}'_0)=(\Delta\bd{\alpha}',\bd{0})$. We have that
the Jacobian norm 
\be\label{Jacob norm}
\| \mb{J}(\bd{\alpha}) \|_F=\Big[\sum_{i=1}^p\Big(\frac{\partial f}{\partial {\alpha}_i}\Big |_{\bd{\omega}=\bd{\omega}_0}\Big)^2\Big]^{1/2}
=k\| \mb{J}(\bd{\alpha}') \|_F
\ee
and the Cook's local influence measure
\begin{align}\label{Cook influ}
C_{\bd{\eta},\bd{\omega}}
&=\frac{1}{(1+\nabla_{f(\bd{\omega}_0)}   \nabla_{f(\bd{\omega}_0)}^T  )^{1/2}    }
\frac{\bd{\eta}^T  \mb{H}_{f(\bd{\omega}_0)} \bd{\eta}    }{\bd{\eta}^T (\mb{I}+   \nabla_{f(\bd{\omega}_0)}^T  \nabla_{f(\bd{\omega}_0)} )  \bd{\eta} }\nonumber\\
&\ne C_{\bd{\eta},\bd{\omega}'}=C_{k\bd{\eta},\bd{\omega}'}
\end{align}
with $\bd{\omega}(t)=\bd{\omega}_0+t\bd{\eta}$ are not scaling-invariant.
This is problematic especially when the scale heterogeneity exists between parameters to which the perturbations are imposed.
For instance, in the simultaneous perturbations to both input image $\bd{x}$ and trainable network parameters $\bd{\theta}$, i.e., $\bd{\alpha}=(\bd{x}^T,\bd{\theta}^T)^T$, the contribution of $\Delta\bd{x}$ appears to be exaggerated if $\bd{x}$ is scaled with larger values than $\bd{\theta}$.
Another example is the comparison between perturbations to trainable parameters (weights and bias) in a convolution layer and those (shift/scale parameters) in a batch normalization layer.
There are no uniform criteria for the scaling because either rescaling to a unit norm or keeping on the original scales seems to have its own advantages. 
However, our influence measure evades this scaling issue by utilizing the metric tensor of the perturbation manifold rather than that of the usual Euclidean space.

\begin{figure*}[t]
		\centering
	\includegraphics[width=\linewidth]{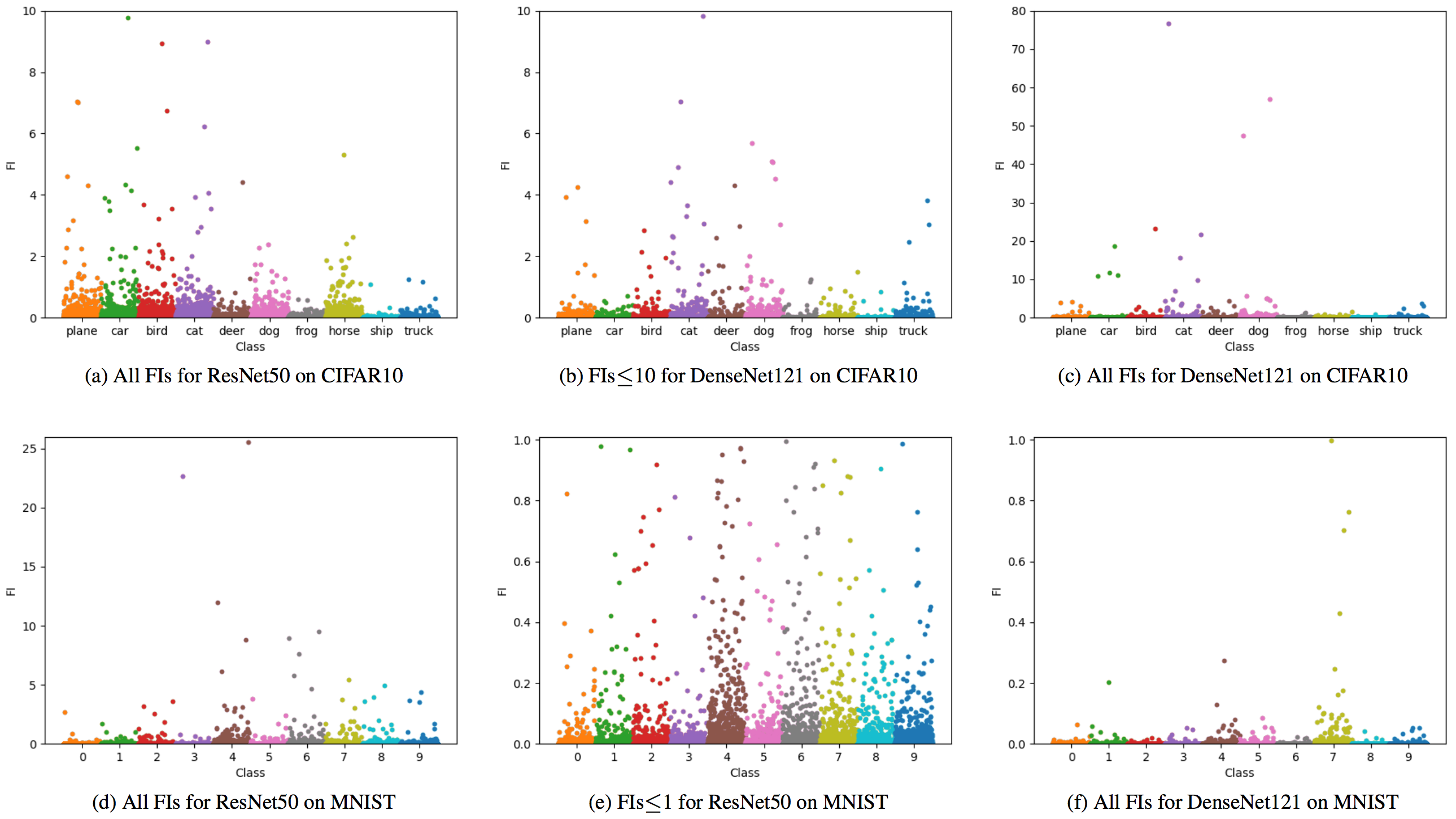}	
	\caption{Manhattan plots for Setup 1.}
	\label{setup1 fig}
\end{figure*}

\subsection{Perturbation Examples}\label{perturb example}
In this subsection, we illustrate how to compute the proposed influence measure for a trained DNN model $P(y|\bd{x},\bd{\theta})=N_{\bd{\theta}}(y,\bd{x})$. We consider the following commonly used perturbations to the input image $\bd{x}$ or the trainable parameters $\bd{\theta}=(\bd{\theta}_1^T,\dots,\bd{\theta}_{L}^T)^T$, where 
$\bd{\theta}_l$ are the parameters in the $l$-th trainable network layer.
\begin{itemize}
\item 	Case 1: $\bd{x}+\Delta \bd{x}$;

\item Case 2: $\bd{\theta}+\Delta \bd{\theta}$;

\item Case 3: $\bd{\theta}_l+\Delta \bd{\theta}_l$.

\end{itemize}

All three cases can be written in a unified form
$\bd{\alpha}+\Delta \bd{\alpha}$ with $\bd{\alpha}\in \{\bd{x},\bd{\theta},\bd{\theta}_l\}$.
Let the perturbation vector $\bd{\omega}=\Delta \bd{\alpha}$ and $\bd{\omega}_0=\bd{0}$.
For the influence measure $\text{FI}_{\bd{\omega}}(\bd{\omega}_0)$ in \eqref{FI closed form},
we have
\be\label{df/da}
\nabla_{f(\bd{\nu}_0)}= (\partial_{\bd{\alpha}} f|_{\bd{\omega}=\bd{\omega}_0}) \mb{U}_0 \mb{\Lambda}_0^{-1/2},
\ee
where $\mb{\Lambda}_0$ and $\mb{U}_0$ are obtained starting from
matrix $\mb{L}_0=[\partial_{\bd{\omega}}^T \ell(\bd{\omega}_0|y,\bd{x},\bd{\theta})P^{1/2}(y|\bd{x},\bd{\theta},\bd{\omega}_0)]_{1\le y\le K}$
through $\mb{L}_0=\mb{B}_0\mb{A}_0$,
$\mb{A}_0\mb{A}_0^T=\mb{U}_A\mb{\Lambda}_0\mb{U}_A^T$
and
$\mb{U}_0=\mb{B}_0\mb{U}_A$.
The component $\partial_{\bd{\omega}} \ell(\bd{\omega}_0|y,\bd{x},\bd{\theta})$
in $\mb{L}_0$ is now computed by
\be\label{dl/dw}
\partial_{\bd{\omega}} \ell(\bd{\omega}_0|y,\bd{x},\bd{\theta})
=\partial_{\bd{\alpha}} \log P(y|\bd{x},\bd{\theta}).
\ee
The gradients $\partial_{\bd{\alpha}} f|_{\bd{\omega}=\bd{\omega}_0}$ and $\partial_{\bd{\alpha}} \log P(y|\bd{x},\bd{\theta})$
can be calculated easily via backpropagation~\cite{goodfellow2016deep} in deep learning libraries
like TensorFlow \cite{Abad16} and Pytorch \cite{Pasz17}.

Next, we consider a specific DNN example under Case~3.
Consider the following feedforward DNN architecture before the softmax layer:
\[
\bd{n}_{\bd{\theta}}(\bd{x})=\bd{\sigma}_{L}(\mb{\Theta}_L \bd{\sigma}_{L-1}( \cdots (\mb{\Theta}_3 \bd{\sigma}_2( \mb{\Theta}_2 \bd{\sigma}_1(\mb{\Theta}_1\bd{x} )     )       ) ) ) \in  \mathbb{R}^K,
\]
where $\bd{x}\in \mathbb{R}^{k_0}$, $\mb{\Theta}_l \in \mathbb{R}^{k_l\times k_{l-1}}$, $\bd{\theta}_l=\text{vec}(\mb{\Theta}_l^T)$, and $\bd{\sigma}_l$'s are entry-wise activation functions.
For notational simplicity,  we set all bias terms to zero and consider the sigmoid function
\[
\sigma(x)=[1+\exp(-x)]^{-1}\ \text{with}\  \dot{\sigma}(x)=\sigma(x)(1-\sigma(x))
\]
for all activation functions. 
Let $\bd{i}_l(\bd{x},\bd{\theta})$ and $\bd{o}_l(\bd{x},\bd{\theta})$
be the input and output vectors of the $l$-th layer such that
$\bd{o}_l(\bd{x},\bd{\theta})=\bd{\sigma}_{l}(\bd{i}_l(\bd{x},\bd{\theta}))$
and $\bd{o}_0(\bd{x},\bd{\theta})=\bd{x}$.
The softmax function is given by
\[
\bd{g}(\bd{z})=\left( \frac{\exp(z_1)}{\sum_{k=1}^K  \exp(z_k)} ,\dots,  \frac{\exp(z_K)}{\sum_{k=1}^K  \exp(z_k)}  \right)^T.
\]
The whole DNN model is
$$P(y|\bd{x},\bd{\theta})=N_{\bd{\theta}}(y, \bd{x})=\bd{g}(\bd{n}_{\bd{\theta}}(\bd{x}))_{[y]},$$ 
for $y=1,\dots,K$, where $\bd{g}(\cdot)_{[y]}$ is the $y$-th entry of vector~$\bd{g}(\cdot)$.
Under Case~3, we have $\bd{\alpha}=\bd{\theta}_l$.
Choose the objective function $f$ to be the cross-entropy, i.e.,
\[
f(\bd{\alpha},\bd{\omega})=-\log P(y=y_{\text{true}} |\bd{x},\bd{\theta},\bd{\omega}).
\]
Hence, in \eqref{df/da} we have $\partial_{\bd{\alpha}}f|_{\bd{\omega}=\bd{\omega}_0}=-\partial_{\bd{\alpha}} \log P(y=y_{\text{true}} |\bd{x},\bd{\theta})$.
Then, to calculate the gradients in \eqref{df/da} and \eqref{dl/dw}, we only need to consider
$
\partial_{\bd{\alpha}} \log P(y|\bd{x},\bd{\theta})
= \partial_{\bd{\theta}_l} \log(\bd{g}(\bd{n}_{\bd{\theta}}(\bd{x}))_{[y]}).
$ 
Note that
$
\partial_{\bd{z}}\log( \bd{g}(\bd{z})_{{[y]}})= (\bd{e}_y- \bd{g}(\bd{z}))^T,
$
where $\bd{e}_y\in \mathbb{R}^K$ has 1 in the $y$-th entry and
0 in the others.
Moreover, 
$
\partial_{\bd{\theta}_l} \bd{n}_{\bd{\theta}}(\bd{x})
=\mb{D}_{L}\mb{\Theta}_L\mb{D}_{L-1}\cdots \mb{D}_{l+1} \mb{\Theta}_{l+1}\mb{D}_{l}\mb{O}_{l-1},
$
with $\mb{D}_{l}=\text{diag}(\{\dot{\sigma} (\bd{i}_l(\bd{x},\bd{\theta})_{[j]}) \}_{j=1}^{k_l})\in \mathbb{R}^{k_l\times k_l}$
and
$
\mb{O}_{l-1}=\text{diag} (\{ \bd{o}^T_{l-1} (\bd{x},\bd{\theta}) ,  \dots, \bd{o}^T_{l-1} (\bd{x},\bd{\theta})     \}     )\in \mathbb{R}^{k_l \times (k_{l-1}k_l )}.
$
Hence, for  \eqref{df/da} and \eqref{dl/dw},  we have 
\begin{align*}
\lefteqn{\partial_{\bd{\alpha}} \log P(y|\bd{x},\bd{\theta})}\\
&= (\bd{e}_y-   \bd{g}( \bd{n}_{\bd{\theta}}(\bd{x})  )   )^T \mb{D}_{L}\mb{\Theta}_L\mb{D}_{L-1}\cdots \mb{D}_{l+1} \mb{\Theta}_{l+1}\mb{D}_{l}\mb{O}_{l-1}.
\end{align*}

\section{Experiments}	

In this section, we investigate the performance of our local influence measure. 
We address the four tasks stated in 
Introduction through the following setups 
under the three perturbation cases in Section~\ref{perturb example}. %Subsection~\ref{perturb example}: 	
\begin{itemize}[leftmargin=*]
	\item Setup 1: Compute each training image's FI under Case~1, with $f$ being the cross entropy, i.e., $f=-\log P(y=y_{\text{true}} |\bd{x},\bd{\theta},\bd{\omega})$.
	\item Setup 2: 
	Let $f=-\log P(y=y_{\text{true}} |\bd{x},\bd{\theta},\bd{\omega})$.
	\begin{itemize}[leftmargin=*]
		\item Setup 2.1: Compute each training image's FI under Case~2.
		
		\item Setup 2.2: Compute each trainable network layer's FI under Case~3
		for each training image.
	\end{itemize}
	
	\item Setup 3: Compute each image's FI under Case~1 for both training and test sets, where $f=-\log P(y=y_{\text{pred}}|\bd{x},\bd{\theta},\bd{\omega})$.
	
	\item Setup 4: Compute each pixel's FI under Case~1 for a given image.
	We adopt a multi-scale strategy taking into account the spatial effect. For each pixel, we set $\bd{x}$ in Case~1 to be the $k\times k$ square centered at the pixel with the scale $k\in \{1,3,5,7\}$. We use $f=-\log P(y=y_{\text{pred}}|\bd{x},\bd{\theta},\bd{\omega})$.
\end{itemize}	

For the cross-entropy like function $f$ in Setups 3 and 4, we use the predicted label $y_{\text{pred}}$ instead of the true label $y_{\text{true}}$ for the prediction purpose
rather than the training purpose in the first two setups.
In Setup 4, the scale of the pixel-level FI is analogous to the convolutional kernel size.

We conduct experiments on the two benchmark datasets CIFAR10 and MNIST using the two popular DNN models
ResNet50 \cite{He16} and DenseNet121 \cite{Huan17}. %VGG-16 \cite{Simo14}
Originally, there are 50,000 and 60,000 training images for CIFAR10 and MNIST, respectively. 
As the validation sets, we use randomly selected 10\% of those images, with the same number for each class.
No data augmentation is used for the training process. 
Both datasets have 10,000 test images.
The prediction accuracy of our trained models is summarized  in Table~\ref{acc table}.

\begin{table}[h]
	\caption{Accuracy\,for\,models\,trained\,without\,data\,augmentation}
	\label{acc table}
	\centering
		\scalebox{0.90}{
	\begin{tabular}{cccccccc}
		\toprule
		&	\multicolumn{2}{c}{CIFAR10}    & &   \multicolumn{2}{c}{MNIST}           \\
		\cmidrule{2-3}\cmidrule{5-6}
		Model &	Training     & Test    & & Training     & Test  \\
		\midrule
		ResNet50 & 99.78\%&  88.70\% && 99.87\%  & 99.29\%  \\
		DenseNet121 & 99.87\%  & 91.16\% & & 99.998\% & 99.58\%   \\
		\bottomrule
	\end{tabular}
}
\end{table}

\subsection{Outlier Detection}\label{sec: outlier}
We study the outlier detection ability of our proposed influence measure under Setup~1.
Figure~\ref{setup1 fig} illustrates the results of Setup~1
by using  Manhattan plots. DenseNet121 generally has smaller FIs than ResNet50 for the 
two benchmark datasets, excluding several large FIs over 10 shown in Figure~\ref{setup1 fig}(c) for CIFAR10.
The images with the top 5 largest FIs are displayed in Figure~\ref{setup1 fig top5} for each case.
Most of the 20 images, especially those in MNIST, are  difficult even for human visual detection. This indicates the strong power of our influence measure in 
detecting outlier images.

\begin{figure}[b!]
	\centering
	\includegraphics[width=0.92\linewidth]{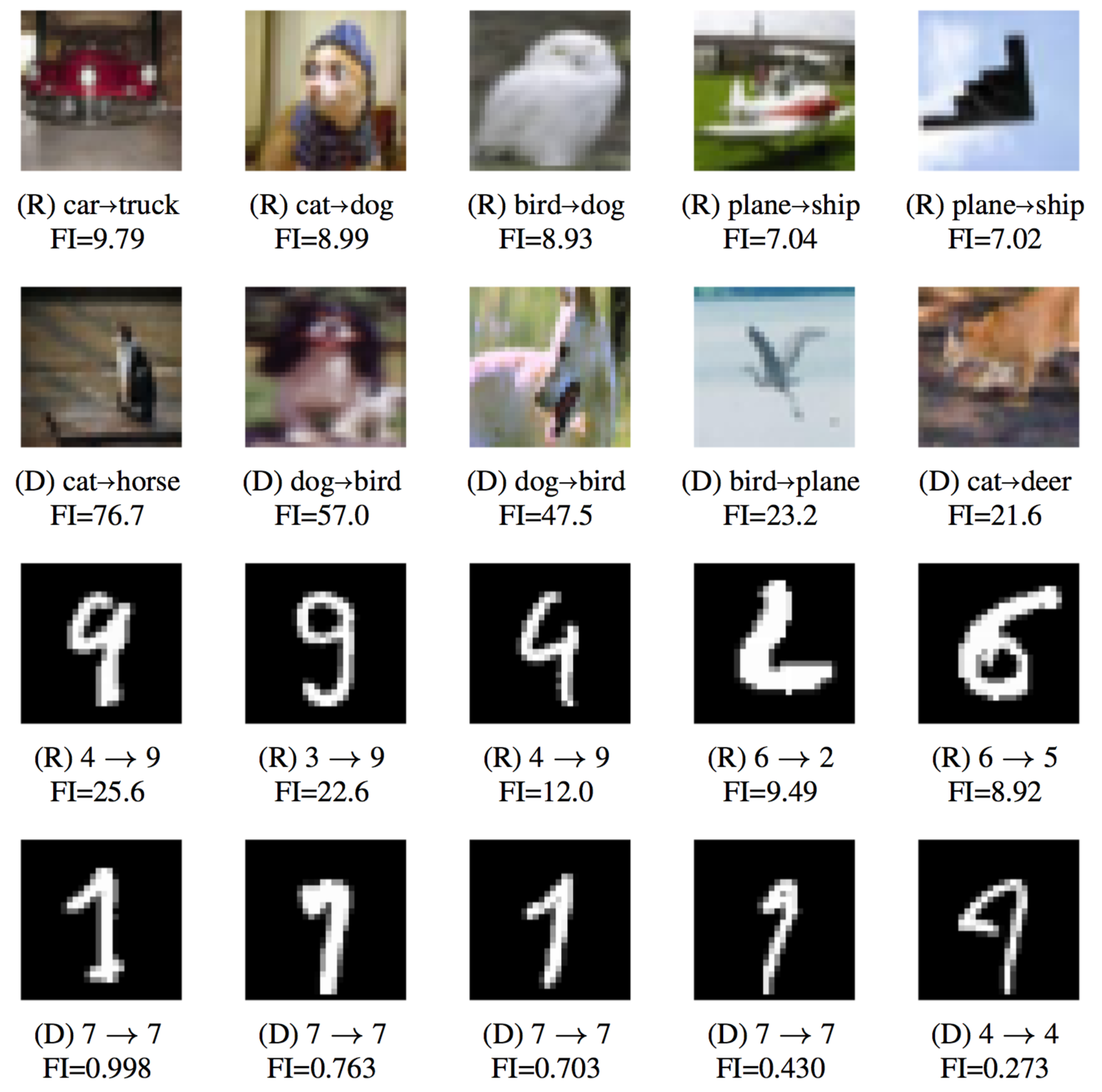}		
	\caption{Images with top 5 largest FIs in Setup\,1 for ResNet50 (R) and DensetNet121 (D). Each subcaption shows $y_{\text{true}}{\rightarrow} y_{\text{pred}}$ and FI.} 
	\label{setup1 fig top5}
\end{figure}

We further examine the  outlier detection power of our proposed influence measure by 
simulating outlier images from MNIST.
Each outlier image was generated by overlapping two training digits of different classes that are shifted up to 4 pixels in each direction, with the true label randomly set to be one of the two classes. 
The two DNN models in Table~\ref{acc table} are trained with additional 50 epochs after incorporating 2700 and 300 simulated outlier images into the training and validation sets, with accuracies reduced up to 0.38\% and 0.11\% for respective training and testing.
The original 54,000 training images are all treated as non-outliers.
We compare the proposed FI measure with the Jacobian norm given in \eqref{Jacob norm}
using the cross-entropy as the objective function $f$. 
The maximal Cook's local influence, $\max_{\bd{\eta}}C_{\bd{\eta},\bd{\omega}}$,
is not considered here due to the expensive computation of the 
very large Hessian matrix; see~\eqref{Cook influ}.
Figure~\ref{setup0 fig} shows the outlier detection results of the two considered measures.
Although the receiver operating characteristic (ROC) curves of the two measures are almost overlapping,
our FI measure significantly outperforms Jacobian norm in terms of the precision-recall (PR) curves
that are more useful for highly unbalanced data \cite{davis2006}.

\begin{figure}
		\centering
	\includegraphics[width=\linewidth]{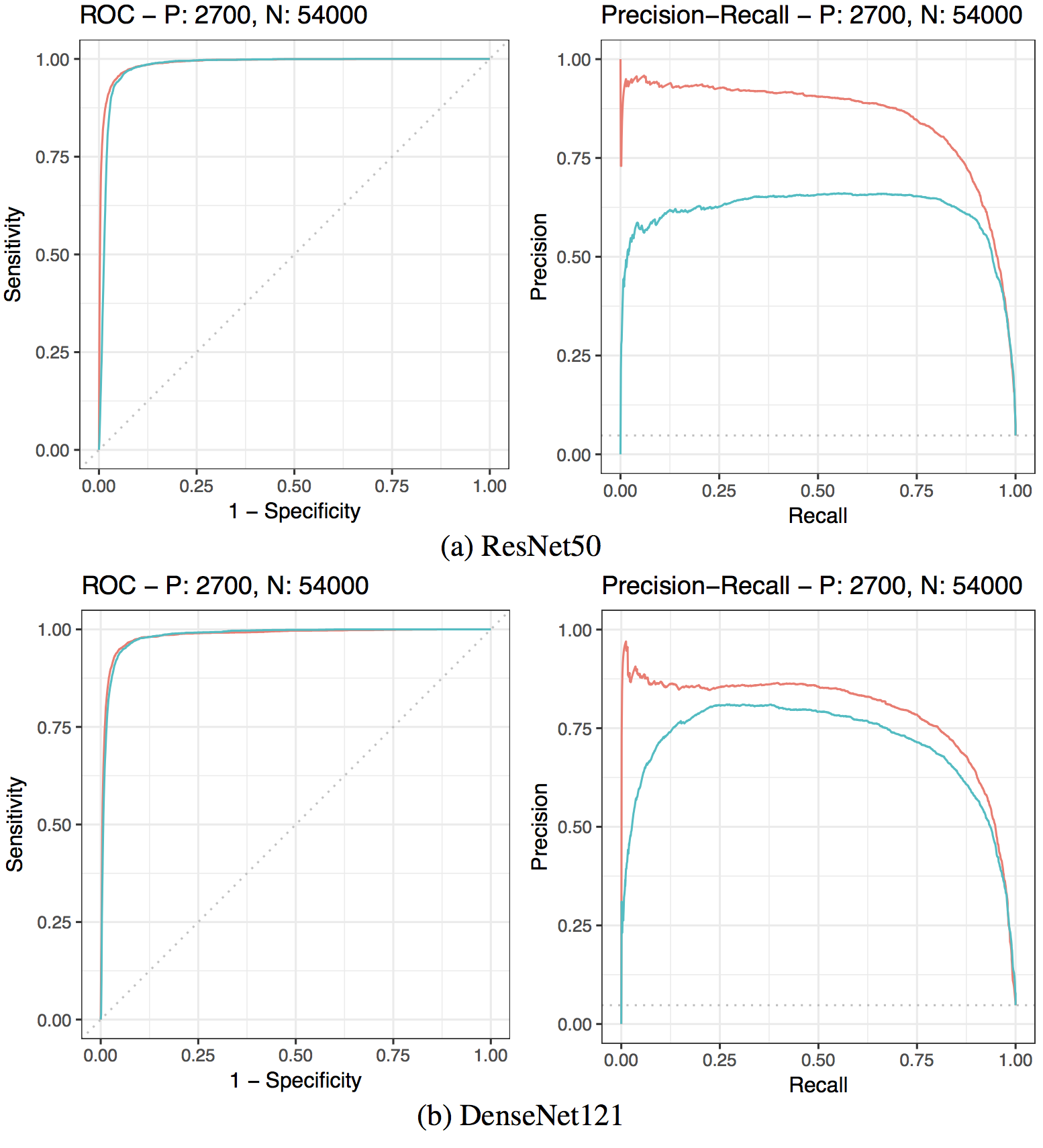}	
	\caption{ROC and PR curves of our proposed FI measure (red) and the Jacobian norm (blue) on MNIST with simulated outliers.}
	\label{setup0 fig}
\end{figure}

\begin{figure*}
		\centering
	\includegraphics[width=\linewidth]{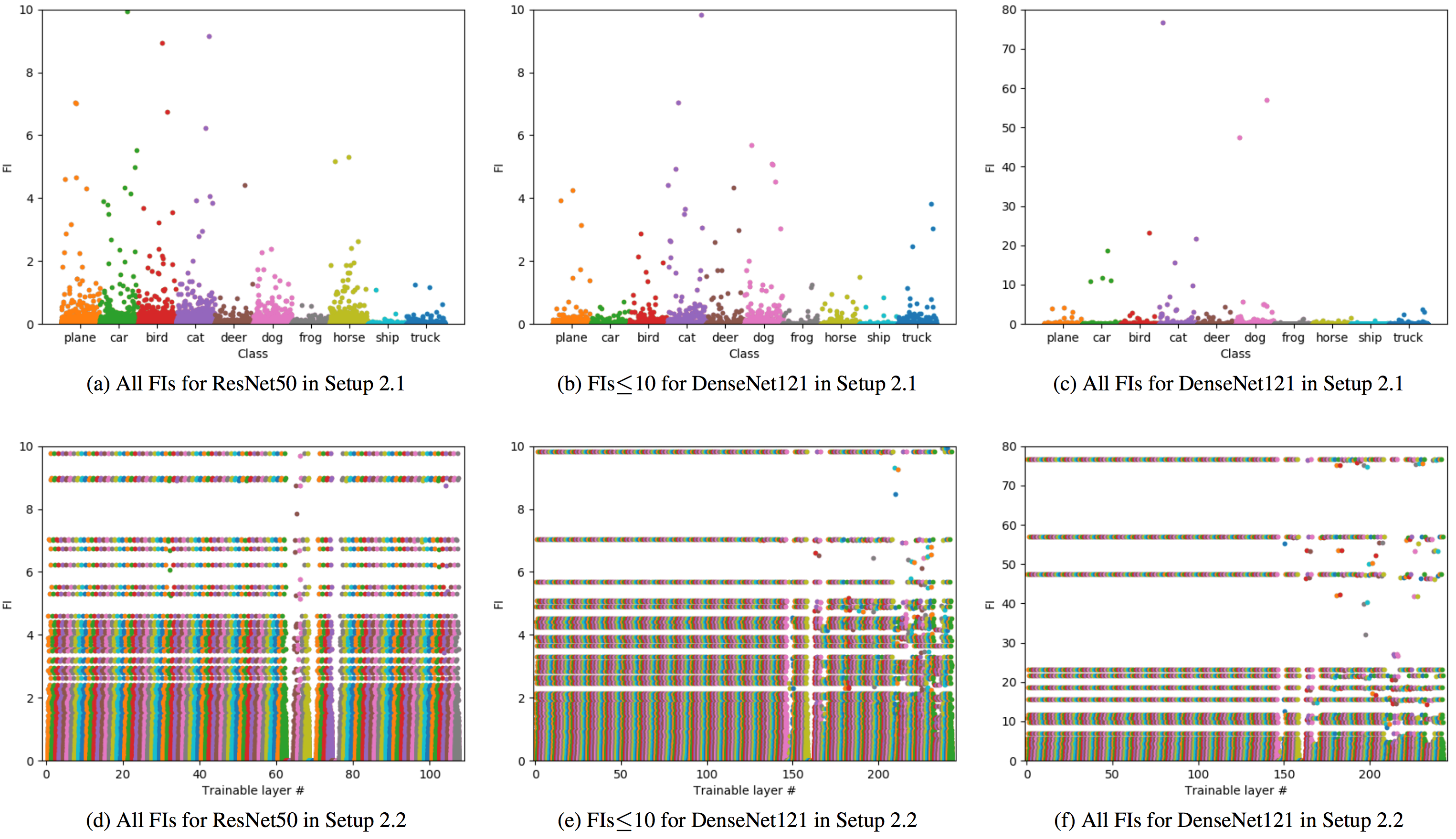}	
	\caption{Manhattan plots for Setup 2 on CIFAR10.}
	\label{setup2 CIFAR10}
\end{figure*}

\begin{figure*}[h!]
		\centering
	\includegraphics[width=\linewidth]{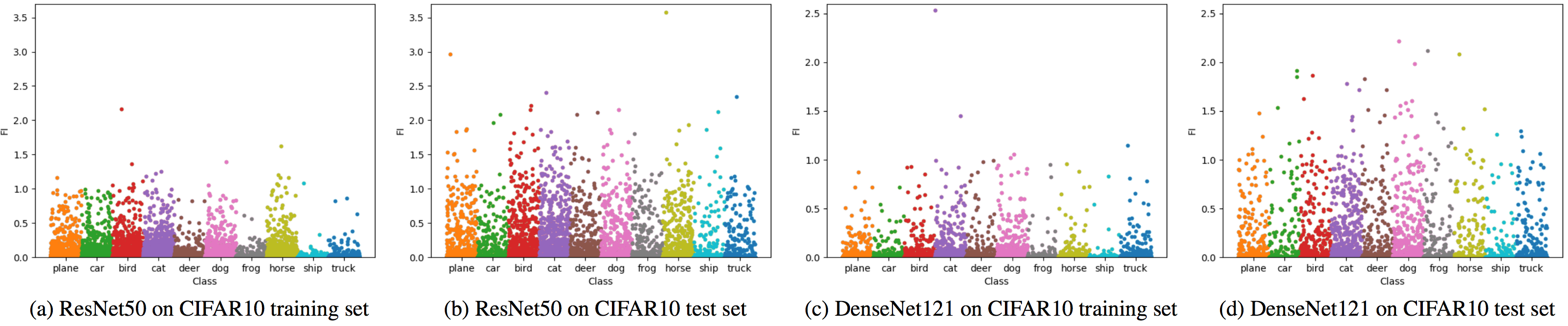}	
	\caption{Manhattan plots for Setup 3 on CIFAR10.}
	\label{setup3 CIFAR10}
\end{figure*}

\subsection{Sensitivity Analysis on DNN Architectures}\label{sec: sen archi}
We conduct the sensitivity analysis on DNN architectures
under Setup~2.
The invariance property of our FI measure shown in 
Theorem~\ref{thm1} enables us to fairly compare
the effects of small perturbations to model parameters of different scales within or between DNNs.
Setup~2.1 compares the sensitivity
between the two DNNs, while Setup~2.2 
undertakes the comparison across trainable layers within each single DNN.

The Manhattan plots for Setup 2 on CIFAR10 are presented in Figure~\ref{setup2 CIFAR10};
results for MNIST are provided in the Supplementary Material.
The patterns on CIFAR10 under Setup 2.1 in Figure~\ref{setup2 CIFAR10}(a)-(c), with mostly smaller FIs for DensetNet121,
are quite similar to those for Setup 1 in Figure~\ref{setup1 fig}(a)-(c), indicating that
DenseNet121 is generally less sensitive 
than ResNet50 to the infinitesimal perturbations
to all network trainable parameters. 
From Figure~\ref{setup2 CIFAR10}(d)-(f) for Setup~2.2, we see stable patterns of FIs over the trainable layers for the two DNNs.
Modifying their network architectures does not appear to be necessary here.
Note that the FI value for the trainable parameters in each single network layer is theoretically dominated by that for all trainable parameters of the entire network, which
is well supported by the comparison between Figure~\ref{setup2 CIFAR10}(a)-(c) and
(d)-(f).

\subsection{Sensitivity Comparison between Training and Test Sets}

We compare the network sensitivity between training and test sets
under Setup~3.
Figure~\ref{setup3 CIFAR10} and
Table~\ref{percentile table} show the FI values for Setup 3 on CIFAR10; results for MNIST are also provided in the Supplementary Material.
In the figure and table, the test set has more slightly large FIs than the training set for both DNNs,
while FIs are generally smaller in both sets for DenseNet121.
We suggest to select a DNN model with similar sensitivity performance 
and smaller FI values on both training and test sets.
Together with the results of Sections~\ref{sec: outlier}
and~\ref{sec: sen archi}
shown in Figure~\ref{setup1 fig} (a)-(c) and Figure~\ref{setup2 CIFAR10} (a)-(c), and also with the model accuracies in Table~\ref{acc table},
DenseNet121 is preferred over ResNet50 on CIFAR10 in terms of both sensitivity and accuracy. 

\begin{table}[h]
	\caption{Percentiles of FI values for Setup 3 on CIFAR10}
	\label{percentile table}
	\centering
	\scalebox{1}{
		\begin{tabular}{cccccccc}
			\toprule
			&	\multicolumn{2}{c}{ResNet50}    & &   \multicolumn{2}{c}{DenseNet121}           \\
			\cmidrule{2-3}\cmidrule{5-6}
			Percentile &	Training     & Test    & & Training     & Test  \\
			\midrule
			75th &2.87e-3 &0.031 && 1.10e-4  & 1.83e-3\\
80th &5.38e-3 & 0.073  && 2.42e-4 & 6.57e-3\\
85th &0.010 &0.160  && 6.00e-4 & 0.025 \\
90th &0.023 & 0.343 && 1.78e-3 & 0.097 \\
95th&0.064 &  0.678 && 7.80e-3 & 0.352\\
98th & 0.177 & 0.999 && 0.037 & 0.755\\
99th & 0.316 & 1.215 && 0.099 & 0.951\\
100th  &2.160 &   3.579 && 2.533 & 2.215\\
			\bottomrule
		\end{tabular}
	}
\end{table}

\subsection{Vulnerable Region Detection}
We apply the multi-scale strategy in Setup~4 to 
detect the areas in an image that are vulnerable to
small perturbations.

For Setup 4, the test images from the two benchmark datasets with the largest FI in Setup~3 by 
DenseNet121
are illustrated in Figure~\ref{setup4 fig}.
The vulnerable areas for both images are mainly in or around the object,
and the image boundaries are generally less sensitive to perturbations.
The figure also reasonably shows that 
the vulnerable areas expand as the scale of pixel-level FI increases.

\begin{figure}[b!]
		\centering
	\includegraphics[width=0.97\linewidth]{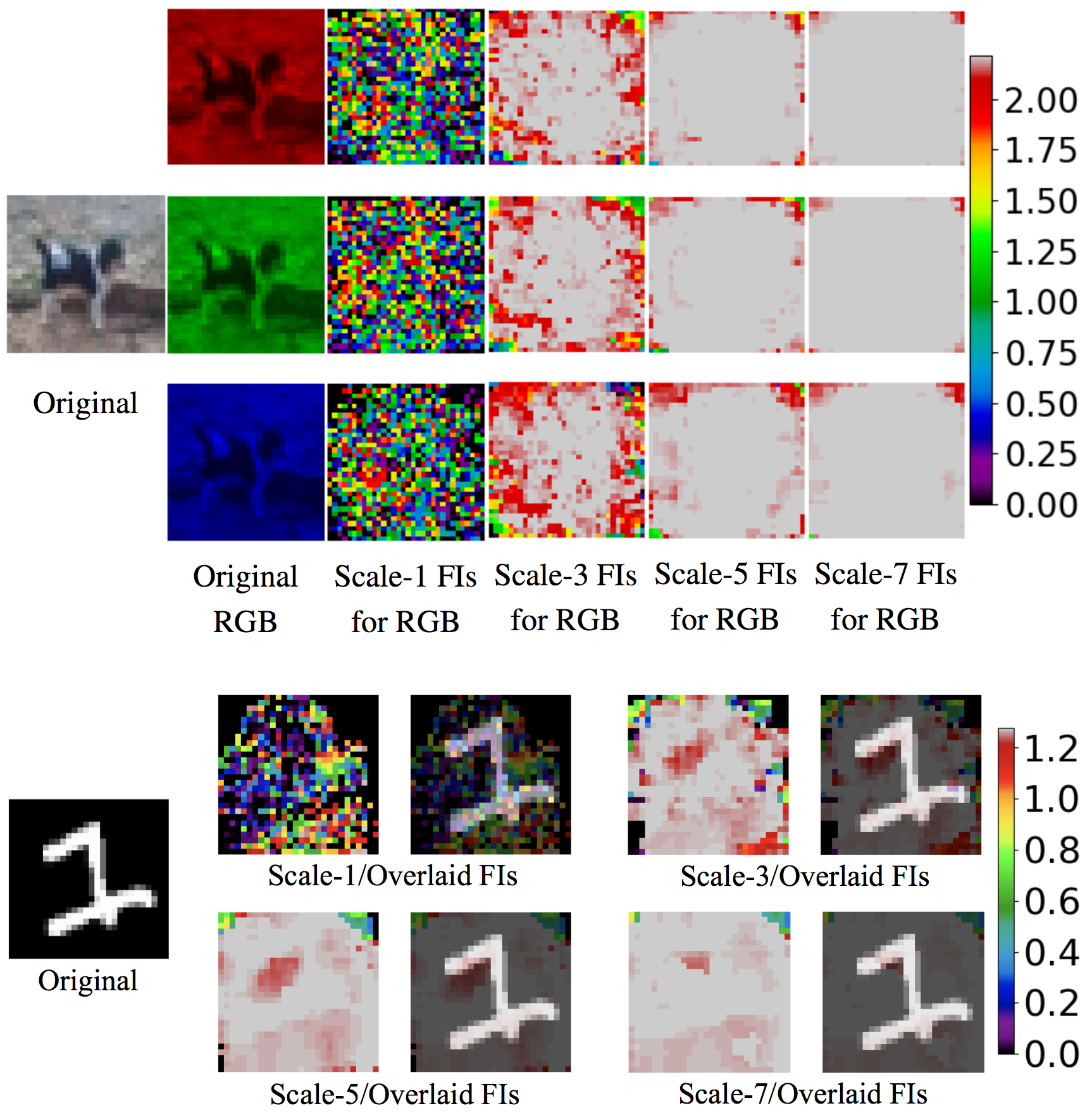}	
	\caption{Multi-scale pixel-level FI maps for Setup 4 using DenseNet121. Results are shown for the test image with the largest FI in Setup 3. The CIFAR10 test image has $\text{Setup-3 FI}=2.22$, $y_{\text{true}}=\text{dog}$, and $y_{\text{pred}}=\text{bird}$. The MNIST test image has $\text{Setup-3 FI}=1.28$, $y_{\text{true}}=1$, and $y_{\text{pred}}=7$. } 	
	\label{setup4 fig}
\end{figure}

Figure~\ref{setup0 fig} illustrates the one-pixel adversarial attacks based on pixel-wise FI maps.
The two selected test images are  correctly predicted by ResNet50 with a high probability 
and also with a large FI in Setup~3.
The pixel-wise FI map denotes the scale-1 pixel-level FI map for the MNIST image, and is the average scale-1 map over the three RGB channels for the CIFAR10 image.
For each image, the attacked pixel is the one with the largest 
value in the pixel-wise FI map.
We see that the prediction result significantly changes after slightly altering the selected pixel's value. This indicates that our FI measure is useful for discovering vulnerable locations and crafting adversarial examples.

\begin{figure}
	\centering
	\includegraphics[width=\linewidth]{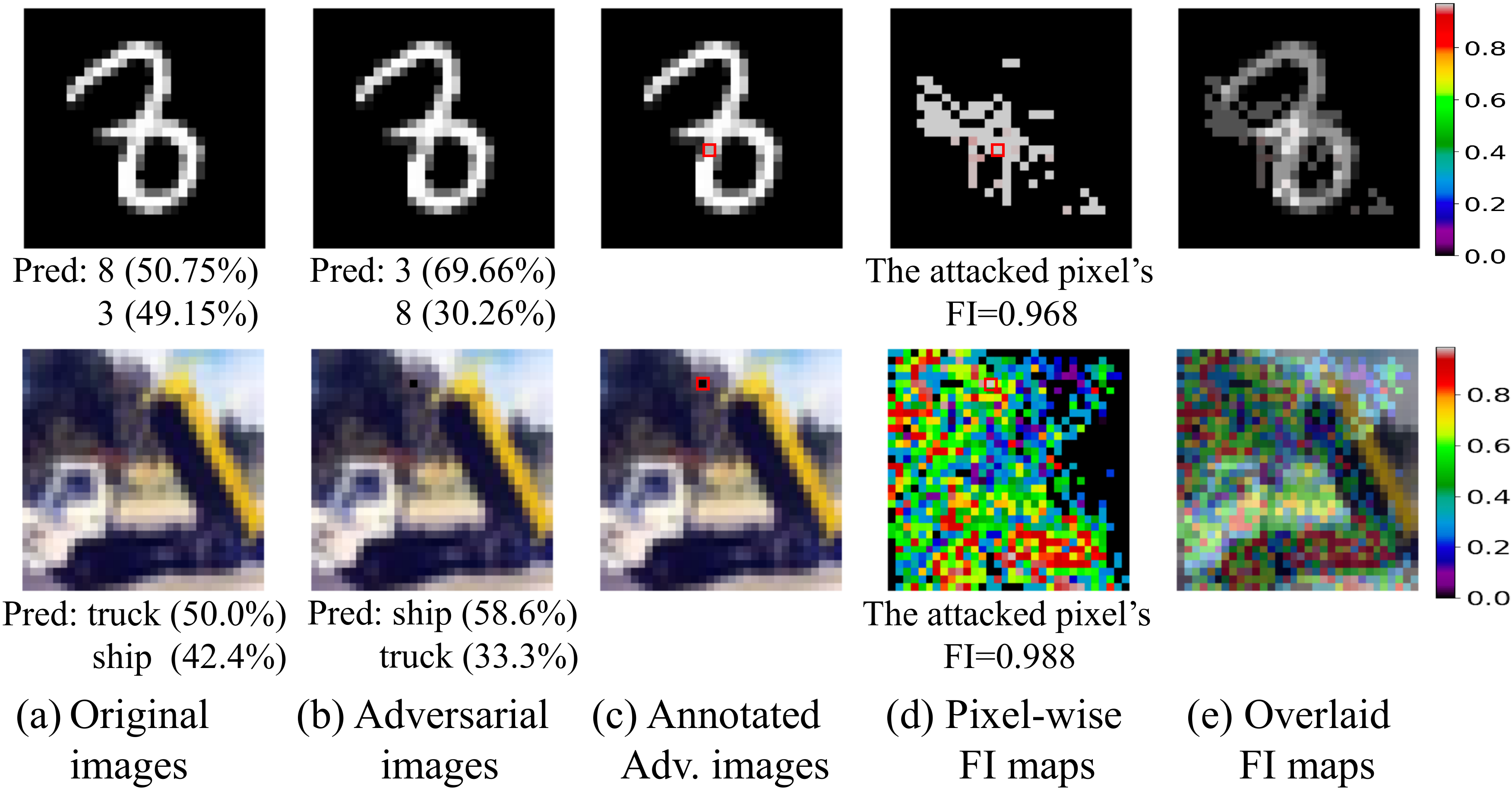}		
	\caption{One-pixel adversarial attacks on ResNet50 using pixel-wise FI maps. The original images have $y_{\text{true}}=8$ and $y_{\text{true}}=\text{truck}$, respectively. The prediction probabilities from ResNet50 are given in the parentheses. 
		The attacked pixels are framed in red.}
	\label{setup0 fig}
\end{figure}

\section{Conclusion}
In this paper, we introduced a novel perturbation manifold and its associated influence measure 
for sensitivity analysis of DNN classifiers. 
This new measure is constructed 
from a Riemannian manifold and provides the invariance property under any diffeomorphic (e.g., scaling) reparameterization of perturbations. 
This invariance property is not owned by 
the widely used measures like the Jacobian norm and Cook's local influence.
Our influence measure performs very well 
for ResNet50 and DenseNet121
trained on CIFAR10 and MNIST datasets
in the tasks of outlier detection, sensitivity comparison between network architectures and that between training and test sets, and vulnerable region detection.

%%silent, begin
\iffalse
MNIST.Dense121

id=23874, 31209, 35746, 41474, 49160

y\_true=7, 4, 7, 7, 7 

y\_pred=7, 4, 7, 7, 7 

FI=0.998, 0.273, 0.430, 0.703, 0.763

MNIST.ResNet50

id=1602,  6054,  9878, 44140,  50934

y\_true=6, 4, 3, 6, 4

y\_pred=5, 9, 9, 2, 9

FI=8.92, 12.0, 22.6, 9.49, 25.6

CIFAR10.ResNet50
id=16874, 17762, 28293, 32169, 38492
y_true=0, 0, 2, 1, 3
y_pred=8, 8, 5, 9, 5
FI=7.04, 7.02,8.93,9.79, 8.99

CIFAR10.DenseNet121
id=3586, 4314, 32758, 35956, 43062
y_true=5, 3, 2, 5, 3
y_pred=2, 7, 0, 2, 4
FI=47.5, 76.7, 23.2, 57.0, 21.6

\fi
%%silent, end

%\small
\bibliographystyle{aaai}
\bibliography{Ref_robust}

\begin{thebibliography}{}

\bibitem[\protect\citeauthoryear{Abadi \bgroup et al\mbox.\egroup
  }{2016}]{Abad16}
Abadi, M.; Barham, P.; Chen, J.; Chen, Z.; Davis, A.; Dean, J.; Devin, M.;
  Ghemawat, S.; Irving, G.; Isard, M.; Kudlur, M.; Levenberg, J.; Monga, R.;
  Moore, S.; Murray, D.~G.; Steiner, B.; Tucker, P.; Vasudevan, V.; Warden, P.;
  Wicke, M.; Yu, Y.; and Zheng, X.
\newblock 2016.
\newblock Tensorflow: A system for large-scale machine learning.
\newblock In {\em 12th {USENIX} Symposium on Operating Systems Design and
  Implementation ({OSDI} 16)},  265--283.

\bibitem[\protect\citeauthoryear{Akhtar and Mian}{2018}]{Akht18}
Akhtar, N., and Mian, A.
\newblock 2018.
\newblock Threat of adversarial attacks on deep learning in computer vision: A
  survey.
\newblock {\em arXiv preprint}  arXiv:1801.00553.

\bibitem[\protect\citeauthoryear{Amari and Nagaoka}{2000}]{Amar00}
Amari, S., and Nagaoka, H.
\newblock 2000.
\newblock {\em Methods of Information Geometry}.
\newblock American Mathematical Society, Providence, RI.

\bibitem[\protect\citeauthoryear{Amari}{1985}]{Amar85}
Amari, S.
\newblock 1985.
\newblock {\em Differential-geometrical Methods in Statistics}.
\newblock Springer-Verlag, New York.

\bibitem[\protect\citeauthoryear{Bojarski \bgroup et al\mbox.\egroup
  }{2016}]{Boja16}
Bojarski, M.; Del~Testa, D.; Dworakowski, D.; Firner, B.; Flepp, B.; Goyal, P.;
  Jackel, L.~D.; Monfort, M.; Muller, U.; Zhang, J.; et~al.
\newblock 2016.
\newblock End to end learning for self-driving cars.
\newblock {\em arXiv preprint}  arXiv:1604.07316.

\bibitem[\protect\citeauthoryear{Carlini and Wagner}{2017}]{Carl17}
Carlini, N., and Wagner, D.
\newblock 2017.
\newblock Towards evaluating the robustness of neural networks.
\newblock In {\em 2017 IEEE Symposium on Security and Privacy},  39--57.

\bibitem[\protect\citeauthoryear{Cheney, Schrimpf, and
  Kreiman}{2017}]{Cheney17}
Cheney, N.; Schrimpf, M.; and Kreiman, G.
\newblock 2017.
\newblock On the robustness of convolutional neural networks to internal
  architecture and weight perturbations.
\newblock {\em arXiv preprint}  arXiv:1703.08245.

\bibitem[\protect\citeauthoryear{Cook}{1986}]{Cook86}
Cook, R.~D.
\newblock 1986.
\newblock Assessment of local influence.
\newblock {\em Journal of the Royal Statistical Society. Series B
  (Methodological)} 48(2):133--169.

\bibitem[\protect\citeauthoryear{Davis and Goadrich}{2006}]{davis2006}
Davis, J., and Goadrich, M.
\newblock 2006.
\newblock The relationship between precision-recall and roc curves.
\newblock In {\em Proceedings of the 23rd International Conference on Machine
  learning},  233--240.

\bibitem[\protect\citeauthoryear{Fawzi, Moosavi-Dezfooli, and
  Frossard}{2017}]{Fawz17}
Fawzi, A.; Moosavi-Dezfooli, S.-M.; and Frossard, P.
\newblock 2017.
\newblock The robustness of deep networks: A geometrical perspective.
\newblock {\em IEEE Signal Processing Magazine} 34(6):50--62.

\bibitem[\protect\citeauthoryear{Goodfellow \bgroup et al\mbox.\egroup
  }{2016}]{goodfellow2016deep}
Goodfellow, I.; Bengio, Y.; Courville, A.; and Bengio, Y.
\newblock 2016.
\newblock {\em Deep Learning}.
\newblock Cambridge: MIT Press.

\bibitem[\protect\citeauthoryear{Goodfellow, Shlens, and
  Szegedy}{2015}]{Good15}
Goodfellow, I.~J.; Shlens, J.; and Szegedy, C.
\newblock 2015.
\newblock Explaining and harnessing adversarial examples.
\newblock In {\em International Conference on Learning Representations}.
\newblock arXiv:1412.6572.

\bibitem[\protect\citeauthoryear{He \bgroup et al\mbox.\egroup }{2015}]{He15}
He, K.; Zhang, X.; Ren, S.; and Sun, J.
\newblock 2015.
\newblock Delving deep into rectifiers: Surpassing human-level performance on
  imagenet classification.
\newblock In {\em Proceedings of the IEEE International Conference on Computer
  Vision},  1026--1034.

\bibitem[\protect\citeauthoryear{He \bgroup et al\mbox.\egroup }{2016}]{He16}
He, K.; Zhang, X.; Ren, S.; and Sun, J.
\newblock 2016.
\newblock Deep residual learning for image recognition.
\newblock In {\em Proceedings of the IEEE Conference on Computer Vision and
  Pattern Recognition},  770--778.

\bibitem[\protect\citeauthoryear{Hein and Andriushchenko}{2017}]{Hein17}
Hein, M., and Andriushchenko, M.
\newblock 2017.
\newblock Formal guarantees on the robustness of a classifier against
  adversarial manipulation.
\newblock In {\em Advances in Neural Information Processing Systems},
  2266--2276.

\bibitem[\protect\citeauthoryear{Huang \bgroup et al\mbox.\egroup
  }{2017}]{Huan17}
Huang, G.; Liu, Z.; Weinberger, K.~Q.; and van~der Maaten, L.
\newblock 2017.
\newblock Densely connected convolutional networks.
\newblock In {\em Proceedings of the IEEE Conference on Computer Vision and
  Pattern Recognition},  4700--4708.

\bibitem[\protect\citeauthoryear{Krizhevsky, Sutskever, and
  Hinton}{2012}]{Kriz12}
Krizhevsky, A.; Sutskever, I.; and Hinton, G.~E.
\newblock 2012.
\newblock Imagenet classification with deep convolutional neural networks.
\newblock In {\em Advances in Neural Information Processing Systems},
  1097--1105.

\bibitem[\protect\citeauthoryear{Moosavi-Dezfooli, Fawzi, and
  Frossard}{2016}]{Moos16}
Moosavi-Dezfooli, S.-M.; Fawzi, A.; and Frossard, P.
\newblock 2016.
\newblock Deepfool: a simple and accurate method to fool deep neural networks.
\newblock In {\em Proceedings of the IEEE Conference on Computer Vision and
  Pattern Recognition},  2574--2582.

\bibitem[\protect\citeauthoryear{Novak \bgroup et al\mbox.\egroup
  }{2018}]{Nova18}
Novak, R.; Bahri, Y.; Abolafia, D.~A.; Pennington, J.; and Sohl-Dickstein, J.
\newblock 2018.
\newblock Sensitivity and generalization in neural networks: an empirical
  study.
\newblock In {\em International Conference on Learning Representations}.
\newblock arXiv:1802.08760.

\bibitem[\protect\citeauthoryear{Paszke \bgroup et al\mbox.\egroup
  }{2017}]{Pasz17}
Paszke, A.; Gross, S.; Chintala, S.; Chanan, G.; Yang, E.; DeVito, Z.; Lin, Z.;
  Desmaison, A.; Antiga, L.; and Lerer, A.
\newblock 2017.
\newblock Automatic differentiation in pytorch.
\newblock In {\em NIPS 2017 Autodiff Workshop}.

\bibitem[\protect\citeauthoryear{Russakovsky \bgroup et al\mbox.\egroup
  }{2015}]{Russ15}
Russakovsky, O.; Deng, J.; Su, H.; Krause, J.; Satheesh, S.; Ma, S.; Huang, Z.;
  Karpathy, A.; Khosla, A.; Bernstein, M.; et~al.
\newblock 2015.
\newblock Imagenet large scale visual recognition challenge.
\newblock {\em International Journal of Computer Vision} 115(3):211--252.

\bibitem[\protect\citeauthoryear{Sharif \bgroup et al\mbox.\egroup
  }{2017}]{Shar17}
Sharif, M.; Bhagavatula, S.; Bauer, L.; and Reiter, M.~K.
\newblock 2017.
\newblock Adversarial generative nets: Neural network attacks on
  state-of-the-art face recognition.
\newblock {\em arXiv preprint}  arXiv:1801.00349.

\bibitem[\protect\citeauthoryear{Su, Vargas, and Kouichi}{2017}]{Su17}
Su, J.; Vargas, D.~V.; and Kouichi, S.
\newblock 2017.
\newblock One pixel attack for fooling deep neural networks.
\newblock {\em arXiv preprint}  arXiv:1710.08864.

\bibitem[\protect\citeauthoryear{Szegedy \bgroup et al\mbox.\egroup
  }{2013}]{Szeg13}
Szegedy, C.; Zaremba, W.; Sutskever, I.; Bruna, J.; Erhan, D.; Goodfellow, I.;
  and Fergus, R.
\newblock 2013.
\newblock Intriguing properties of neural networks.
\newblock In {\em International Conference on Learning Representations}.
\newblock arXiv:1312.6199.

\bibitem[\protect\citeauthoryear{Weng \bgroup et al\mbox.\egroup
  }{2018}]{Weng18}
Weng, T.-W.; Zhang, H.; Chen, P.-Y.; Yi, J.; Su, D.; Gao, Y.; Hsieh, C.-J.; and
  Daniel, L.
\newblock 2018.
\newblock Evaluating the robustness of neural networks: An extreme value theory
  approach.
\newblock In {\em International Conference on Learning Representations}.
\newblock arXiv:1801.10578.

\bibitem[\protect\citeauthoryear{Zhu \bgroup et al\mbox.\egroup }{2007}]{Zhu07}
Zhu, H.; Ibrahim, J.~G.; Lee, S.; Zhang, H.; et~al.
\newblock 2007.
\newblock Perturbation selection and influence measures in local influence
  analysis.
\newblock {\em The Annals of Statistics} 35(6):2565--2588.

\bibitem[\protect\citeauthoryear{Zhu, Ibrahim, and Tang}{2011}]{Zhu11}
Zhu, H.; Ibrahim, J.~G.; and Tang, N.
\newblock 2011.
\newblock Bayesian influence analysis: a geometric approach.
\newblock {\em Biometrika} 98(2):307--323.

\end{thebibliography}

\end{document}